% This is LLNCS.DOC the documentation file of
% the LaTeX2e class from Springer-Verlag
% for Lecture Notes in Computer Science, version 2.4
\documentclass{llncs}
\usepackage{llncsdoc}
\usepackage{amsmath}
\usepackage{amsfonts}
\usepackage[dvipsnames]{xcolor}
\usepackage{graphicx}
\usepackage{marvosym}
\usepackage{multirow}
\usepackage[hidelinks]{hyperref}
\usepackage[font=small,format=plain,labelfont=bf,up,textfont=small,up,justification=justified,singlelinecheck=false]{caption}
\captionsetup[table]{
  justification=justified,
  singlelinecheck=true,
  skip = \medskipamount}

\pagenumbering{gobble}

\newcommand\Tstrut{\rule{0pt}{2.6ex}}
\title{Measuring Relations between Concepts in Conceptual Spaces\thanks{The final publication is available at Springer via \url{https://doi.org/10.1007/978-3-319-71078-5_7}}}
\author{Lucas Bechberger \thanks{Corresponding author; ORCID: 0000-0002-1962-1777} \and Kai-Uwe K\"uhnberger}
\institute{Institute of Cognitive Science, Osnabr\"uck University, Osnabr\"uck, Germany \email{lucas.bechberger@uni-osnabrueck.de}, \email{kai-uwe.kuehnberger@uni-osnabrueck.de}}

\begin{document}
\maketitle

%================================================================================================================================================================%

\begin{abstract}
The highly influential framework of conceptual spaces provides a geometric way of representing knowledge. Instances are represented by points in a high-dimensional space and concepts are represented by regions in this space. Our recent mathematical formalization of this framework is capable of representing correlations between different domains in a geometric way. In this paper, we extend our formalization by providing quantitative mathematical definitions for the notions of concept size, subsethood, implication, similarity, and betweenness. This considerably increases the representational power of our formalization by introducing measurable ways of describing relations between concepts.
\begin{keywords}
Conceptual Spaces \textperiodcentered\; Fuzzy Sets \textperiodcentered\; Measure
\end{keywords}
\end{abstract}

%================================================================================================================================================================%
\section{Introduction}
\label{Intro}

One common criticism of symbolic AI approaches is that the symbols they operate on do not contain any meaning: For the system, they are just arbitrary tokens that can be manipulated in some way. This lack of inherent meaning in abstract symbols is called the “symbol grounding problem” \cite{Harnad1990}. One approach towards solving this problem is to devise a grounding mechanism that connects  abstract symbols to the real world, i.e., to perception and action.

The framework of conceptual spaces \cite{Gardenfors2000,Gardenfors2014} attempts to bridge this gap between symbolic and subsymbolic AI by proposing an intermediate conceptual layer based on geometric representations.
A conceptual space is a high-dimensional space spanned by a number of quality dimensions that are based on perception and/or subsymbolic processing. Regions in this space correspond to concepts and can be referred to as abstract symbols.

The framework of conceptual spaces has been highly influential in the last 15 years within cognitive science and cognitive linguistics \cite{Douven2011,Fiorini2013,Warglien2012}. It has also sparked considerable research in various subfields of artificial intelligence, ranging from robotics and computer vision \cite{Chella2001,Chella2003} over the semantic web \cite{Adams2009a} to plausible reasoning \cite{Derrac2015,Schockaert2011}.

One important aspect of conceptual representations is however often ignored by these research efforts: Typically, the different features of a concept are correlated with each other. For instance, there is an obvious correlation between the color and the taste of an apple: Red apples tend to be sweet and green apples tend to be sour. Recently, we have proposed a formalization of the conceptual spaces framework that is capable of representing such correlations in a geometric way \cite{Bechberger2017KI}. Our formalization not only contains a parametric definition of concepts, but also different operations to create new concepts from old ones (namely: intersection, union, and projection). 

In this paper, we provide mathematical definitions for the notions of concept size, subsethood, implication, similarity, and betweenness. This considerably increases the representational power of our formalization by introducing measurable ways of describing relations between concepts.

The remainder of this paper is structured as follows:
Section \ref{CS} introduces the general framework of conceptual spaces along with our recent formalization. In Section \ref{Extension}, we extend this formalization with additional operations and in Section \ref{Example} we provide an illustrative example. Section \ref{RelatedWork} contains a summary of related work and Section \ref{Conclusion} concludes the paper.

%================================================================================================================================================================%
\section{Conceptual Spaces}
\label{CS}

This section presents the cognitive framework of conceptual spaces as described in \cite{Gardenfors2000} and as formalized in \cite{Bechberger2017KI}.

%--------------------------------------------------------------------------------------------------------------------------------------------------------------------------------------------------------------------------------------------------------------------------------------------------------------%
\subsection{Dimensions, Domains, and Distance}
\label{CS:DimensionsDomainsDistance}

%dimensions
A conceptual space is a high-dimensional space spanned by a set $D$ of so-called ``quality dimensions''. Each of these dimensions $d \in D$ represents a way in which two stimuli can be judged to be similar or different. Examples for quality dimensions include temperature, weight, time, pitch, and hue. The distance between two points $x$ and $y$ with respect to a dimension $d$ is denoted as $|x_d - y_d|$.

%domains
A domain $\delta \subseteq D$ is a set of dimensions that inherently belong together. Different perceptual modalities (like color, shape, or taste) are represented by different domains. The color domain for instance consists of the three dimensions hue, saturation, and brightness. Distance within a domain $\delta$ is measured by the weighted Euclidean metric $d_E$.  

The overall conceptual space $CS$ is defined as the product space of all dimensions. Distance within the overall conceptual space is measured by the weighted Manhattan metric $d_M$ of the intra-domain distances. This is supported by both psychological evidence \cite{Attneave1950,Johannesson2001,Shepard1964} and mathematical considerations \cite{Aggarwal2001}. Let $\Delta$ be the set of all domains in $CS$. The combined distance $d_C^{\Delta}$ within $CS$ is defined as follows:
$$
d_C^{\Delta}(x,y,W) = \sum_{\delta \in \Delta}w_{\delta} \cdot \sqrt{\sum_{d \in \delta} w_{d} \cdot |x_{d} - y_{d}|^2}
$$
The parameter $W = \langle W_{\Delta},\{W_{\delta}\}_{\delta \in \Delta}\rangle$ contains two parts: $W_{\Delta}$ is the set of positive domain weights $w_{\delta}$ with $\textstyle\sum_{\delta \in \Delta} w_{\delta} = |\Delta|$. Moreover, $W$ contains for each domain $\delta \in \Delta$ a set $W_{\delta}$ of dimension weights $w_{d}$ with $\textstyle\sum_{d \in \delta} w_{d} = 1$.\\

%similarity as distance-based with exponential decay
The similarity of two points in a conceptual space is inversely related to their distance. This can be written as follows :
$$Sim(x,y) = e^{-c \cdot d(x,y)}\quad \text{with a constant}\; c >0 \; \text{and a given metric}\; d$$

%betweenness and convexity (formal)
Betweenness is a logical predicate $B(x,y,z)$ that is true if and only if $y$ is considered to be between $x$ and $z$. It can be defined based on a given metric $d$: 
$$B_d(x,y,z) :\iff d(x,y) + d(y,z) = d(x,z)$$

The betweenness relation based on $d_E$ results in the line segment connecting the points $x$ and $z$, whereas the betweenness relation based on $d_M$ results in an axis-parallel cuboid between the points $x$ and $z$.
One can define convexity and star-shapedness based on the notion of betweenness:

\begin{definition}
\label{def:Convexity}
(Convexity)\\
A set $C \subseteq CS$ is \emph{convex} under a metric $d \;:\iff$

\hspace{1cm}$\forall {x \in C, z \in C, y \in CS}: \left(B_d(x,y,z) \rightarrow y \in C\right)$
\end{definition}

\begin{definition}
\label{def:StarShapedSet}
(Star-shapedness)\\
A set $S \subseteq CS$ is \emph{star-shaped} under a metric $d$ with respect to a set $P \subseteq S \;:\iff$ 

\hspace{1cm}$\forall {p \in P, z \in S, y \in CS}: \left(B_d(p,y,z) \rightarrow y \in S\right)$
\end{definition}

%--------------------------------------------------------------------------------------------------------------------------------------------------------------------------------------------------------------------------------------------------------------------------------------------------------------%
\subsection{Properties and Concepts}
\label{CS:PropertiesConcepts}

G\"{a}rdenfors \cite{Gardenfors2000} distinguishes properties like ``red'', ``round'', and ``sweet'' from full-fleshed concepts like ``apple'' or ``dog'' by observing that properties can be defined on individual domains (e.g., color, shape, taste), whereas full-fleshed concepts involve multiple domains.
Each domain involved in representing a concept has a certain importance, which is reflected by so-called ``salience weights''. Another important aspect of concepts are the correlations between the different domains, which are important for both learning \cite{Billman1996} and reasoning \cite[Ch 8]{Murphy2002}.

Based on the principle of cognitive economy, G\"{a}rdenfors argues that both properties and concepts should be represented as convex sets. However, as we demonstrated in \cite{Bechberger2017KI}, one cannot geometrically encode correlations between domains when using convex sets:
The left part of Figure \ref{fig:ConvexityProblem} shows two domains, age and height, and the concepts of child and adult. The solid ellipses illustrate the intuitive way of defining these concepts. As domains are combined with the Manhattan metric, a convex set corresponds in this case to an axis-parallel cuboid. One can easily see that this convex representation (dashed rectangles) is not satisfactory, because the correlation of the two domains is not encoded. We therefore proposed in \cite{Bechberger2017KI} to relax the convexity criterion and to use star-shaped sets, which is illustrated in the right part of Figure \ref{fig:ConvexityProblem}. This enables a geometric representation of correlations while still being only a minimal departure from the original framework.\\
\begin{figure}[tp]
\centering
\includegraphics[width=0.75\columnwidth]{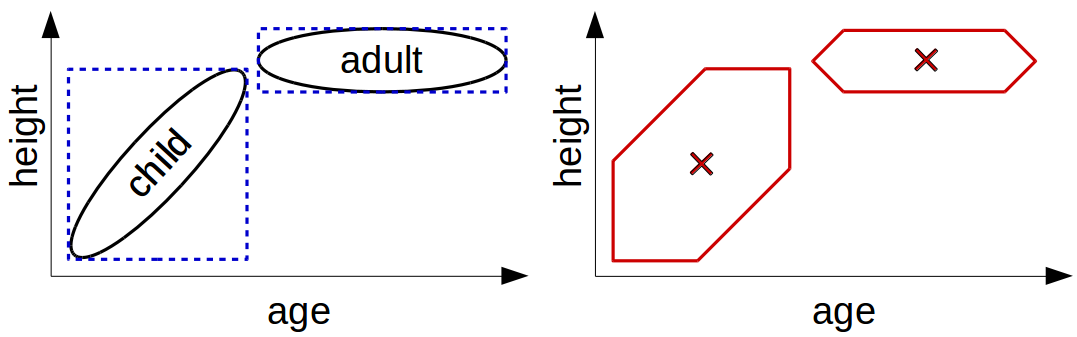}
\caption{Left: Intuitive way to define regions for the concepts of ``adult'' and ``child'' (solid) as well as representation by using convex sets (dashed). Right: Representation by using star-shaped sets with central points marked by crosses.}
\label{fig:ConvexityProblem}
\end{figure}

We have based our formalization on axis-parallel cuboids that can be described by a triple $\langle \Delta_C, p^-, p^+ \rangle$ consisting of a set of domains $\Delta_C$ on which this cuboid $C$ is defined and two points $p^-$ and $p^+$, such that 
$$x \in C \iff \forall{\delta \in \Delta_C}:\forall{d \in \delta}: p_d^- \leq x_d \leq p_d^+$$
These cuboids are convex under $d_C^{\Delta}$. It is also easy to see that any union of convex sets that have a non-empty intersection is star-shaped \cite{Smith1968}. We define the core of a concept as follows:

\begin{definition}
\label{def:SSSS}
(Simple star-shaped set)\\
A \emph{simple star-shaped set} $S$ is described as a tuple $\langle\Delta_S,\{C_1,\dots,C_m\}\rangle$. $\Delta_S \subseteq \Delta$ is a set of domains on which the cuboids $\{C_1,\dots,C_m\}$ (and thus also $S$) are defined. Moreover, it is required that the central region $P :=\textstyle\bigcap_{i = 1}^m C_i \neq \emptyset$. Then the simple star-shaped set $S$ is defined as 
$$S := \bigcup_{i=1}^m C_i$$
\end{definition}

In order to represent imprecise concept boundaries, we use fuzzy sets \cite{Bvelohlavek2011,Zadeh1965,Zadeh1982}. A fuzzy set is characterized by its membership function $\mu: CS \rightarrow [0,1]$ that assigns a degree of membership to each point in the conceptual space. The membership of a point to a fuzzy concept is based on its maximal similarity to any of the points in the concept's core:

\begin{definition}
\label{def:FSSSS}
(Fuzzy simple star-shaped set)\\
A \emph{fuzzy simple star-shaped set} $\widetilde{S}$ is described by a quadruple $\langle S,\mu_0,c,W\rangle$ where
$S = \langle\Delta_S,\{C_1,\dots,C_m\}\rangle$ is a non-empty simple star-shaped set. The parameter $\mu_0 \in (0,1]$ controls the highest possible membership to $\widetilde{S}$ and is usually set to 1. The sensitivity parameter $c > 0$ controls the rate of the exponential decay in the similarity function. Finally, $W = \langle W_{\Delta_S},\{W_{\delta}\}_{\delta \in \Delta_S}\rangle$ contains positive weights for all domains in $\Delta_S$ and all dimensions within these domains, reflecting their respective importance. We require that $\textstyle\sum_{\delta \in \Delta_S} w_{\delta} = |\Delta_S|$ and that $\forall {\delta \in \Delta_S}:\textstyle\sum_{d \in \delta} w_{d} = 1$.
The membership function of $\widetilde{S}$ is then defined as follows:
$$\mu_{\widetilde{S}}(x) = \mu_0 \cdot \max_{y \in S}(e^{-c \cdot d_C^{\Delta}(x,y,W)})$$
\end{definition}

The sensitivity parameter $c$ controls the overall degree of fuzziness of $\widetilde{S}$ by determining how fast the membership drops to zero. The weights $W$ represent not only the relative importance of the respective domain or dimension for the represented concept, but they also influence the relative fuzziness with respect to this domain or dimension.
Note that if $|\Delta_S| = 1$, then $\widetilde{S}$ represents a property, and if $|\Delta_S| > 1$, then $\widetilde{S}$ represents a concept.
Figure \ref{fig:FSSSS} illustrates this definition (the $x$ and $y$ axes are assumed to belong to different domains and are combined with $d_M$ using equal weights).
\begin{figure}[tp]
\centering
\includegraphics[width = \columnwidth]{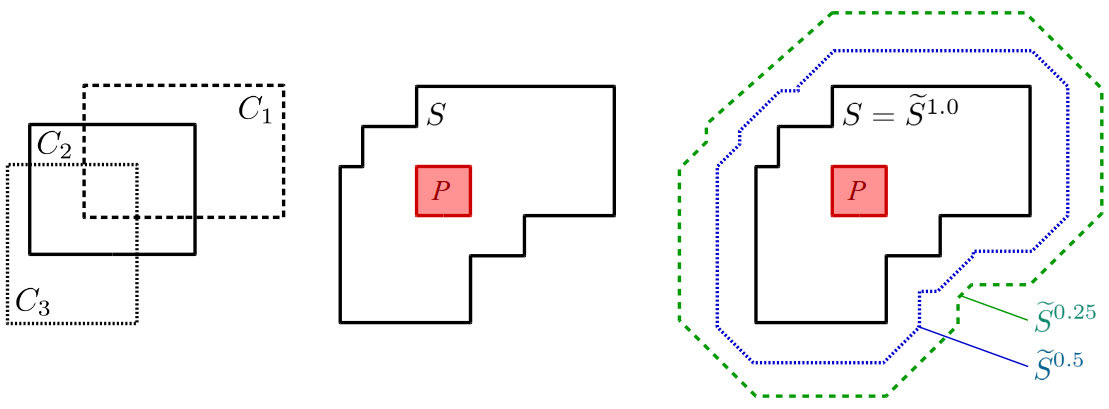} 
\caption{Left: Three cuboids $C_1, C_2, C_3$ with nonempty intersection. Middle: Resulting simple star-shaped set $S$ based on these cuboids. Right: Fuzzy simple star-shaped set $\tilde{S}$ based on $S$ with three $\alpha$-cuts for $\alpha \in \{1.0,0.5,0.25\}$.}
\label{fig:FSSSS}
\end{figure}

In our previous work \cite{Bechberger2017KI}, we have also provided a number of operations, which can be used to create new concepts from old ones: The intersection of two concepts can be interpreted as the logical ``and'' -- e.g., intersecting the property ``green'' with the concept ``banana'' results in the set of all objects that are both green and bananas. The union of two concepts can be used to construct more abstract categories (e.g., defining ``fruit'' as the union of ``apple'', ``banana'', ``coconut'', etc.). Projecting a concept onto a subspace corresponds to focusing on certain domains while completely ignoring others.

%================================================================================================================================================================%
\section{Defining Additional Operations}
\label{Extension}

\subsection{Concept Size}
\label{Extension:Hypervolume}
The size of a concept gives an intuition about its specificity: Large concepts are more general and small concepts are more specific. This is one obvious aspect in which one can compare two concepts to each other.

One can use a measure $M$ to describe the size of a fuzzy set. It can be defined in our context as follows (cf. \cite{Bouchon-Meunier1996}):
\begin{definition}
A measure $M$ on a conceptual space $CS$ is a function $M: \mathcal{F}(CS) \rightarrow \mathbb{R}^+_0$ with $M(\emptyset) = 0$ and $\widetilde{A} \subseteq \widetilde{B} \Rightarrow M(\widetilde{A}) \leq M(\widetilde{B})$, where $\mathcal{F}(CS)$ is the fuzzy power set of $CS$ and where $\widetilde{A} \subseteq \widetilde{B} :\iff \forall {x \in CS}: \mu_{\widetilde{A}}(x) \leq \mu_{\widetilde{B}}(x)$.
\end{definition}

A common measure for fuzzy sets is the integral over the set's membership function, which is equivalent to the Lebesgue integral over the fuzzy set's $\alpha$-cuts\footnote{The $\alpha$-cut of a fuzzy set $\widetilde{A}$ is defined as $\widetilde{A}^{\alpha} = \{ x \in CS \;|\; \mu_{\widetilde{A}}(x) \geq \alpha\}$.}:
\begin{equation}
M(\widetilde{A}) := \int_{CS} \mu_{\widetilde{A}}(x)\; dx = \int_{0}^1 V(\widetilde{A}^{\alpha})\; d\alpha
\label{eqn:integral}
\end{equation}
We use $V(\widetilde{A}^{\alpha})$ to denote the volume of a fuzzy set's $\alpha$-cut.
Let us define for each cuboid $C_i \in S$ its fuzzified version $\widetilde{C}_i$ as follows (cf. Definition \ref{def:FSSSS}):
$$\mu_{\widetilde{C}_i}(x) = \mu_0 \cdot \max_{y \in C_i}(e^{-c \cdot d_C^{\Delta}(x,y,W)})$$
It is obvious that $\mu_{\widetilde{S}}(x) = \max_{C_i \in S} \mu_{\widetilde{C}_i}(x)$. It is also clear that the intersection of two fuzzified cuboids is again a fuzzified cuboid. Finally, one can easily see that we can use the inclusion-exclustion formula (cf. e.g., \cite{Bogart1989}) to compute the overall measure of $\widetilde{S}$ based on the measure of its fuzzified cuboids:
\begin{equation}
M(\widetilde{S}) = \sum_{l=1}^m \left((-1)^{l+1} \cdot \sum_{\substack{\{i_1,\dots,i_l\}\\\subseteq\{1,\dots,m\}}}M\left(\bigcap_{i \in \{i_1,\dots,i_l\}} \widetilde{C}_i\right)\right)
\label{eqn:inclusionExclusion}
\end{equation}
The outer sum iterates over the number of cuboids under consideration (with $m$ being the total number of cuboids in S) and the inner sum iterates over all sets of exactly $l$ cuboids. The overall formula generalizes the observation that $|\widetilde{C}_1 \cup \widetilde{C}_2| = |\widetilde{C}_1| + |\widetilde{C}_2| - |\widetilde{C}_1 \cap \widetilde{C}_2|$ from two to $m$ sets.

In order to derive $M(\widetilde{C})$, we first describe how to compute $V(\widetilde{C}^{\alpha})$, i.e., the size of a fuzzified cuboid's $\alpha$-cut. Using Equation \ref{eqn:integral}, we can then derive $M(\widetilde{C})$, which we can in turn insert into Equation \ref{eqn:inclusionExclusion} to compute the overall size of $\widetilde{S}$.

Figure \ref{fig:2DAlphaCut} illustrates the $\alpha$-cut of a fuzzified two-dimensional cuboid both under $d_E$ (left) and under $d_M$ (right). Because the membership function is defined based on an exponential decay, one can interpret each $\alpha$-cut as an $\epsilon$-neighborhood of the original cuboid C, where $\epsilon$ depends on $\alpha$: 
$$
x \in {\widetilde{C}}^{\alpha} \iff \mu_0 \cdot \max_{y \in C}(e^{-c \cdot d_C^{\Delta}(x,y,W)}) \geq \alpha \iff  \min_{y \in C} d_C^{\Delta}(x,y,W) \leq -\frac{1}{c} \cdot \ln(\frac{\alpha}{\mu_0})
$$
\begin{figure}[tp]
\centering
\includegraphics[width = 0.8\textwidth]{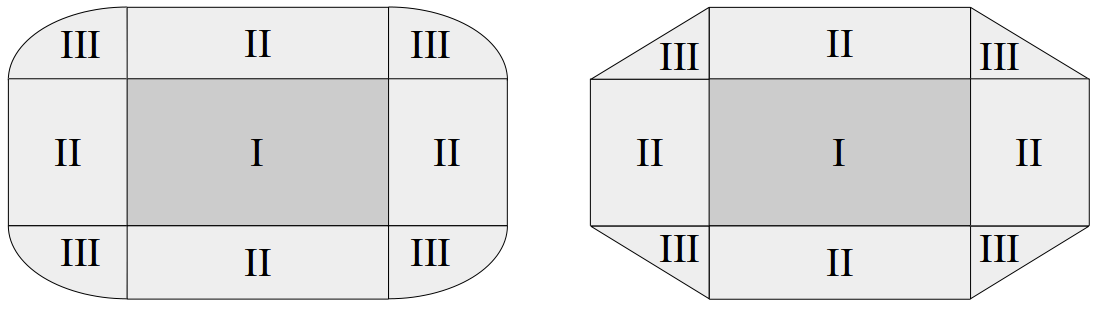}
\caption{$\alpha$-cut of a fuzzified cuboid under $d_E$ (left) and $d_M	$ (right), respectively.}
\label{fig:2DAlphaCut}
\end{figure}

$V(\widetilde{C}^{\alpha})$ can be described as a sum of different components. Let us use the shorthand notation $b_d := p_d^+ - p_d^-$.
Looking at Figure \ref{fig:2DAlphaCut}, one can see that all components of $V(\widetilde{C}^{\alpha})$ can be described by ellipses\footnote{Note that ellipses under $d_M$ have the form of streched diamonds.}: Component I is a zero-dimensional ellipse (i.e., a point) that was extruded in two dimensions with extrusion lengths of $b_1$ and $b_2$, respectively. Component II consists of two one-dimensional ellipses (i.e., line segments) that were extruded in one dimension, and component III is a two-dimensional ellipse. 

Let us denote by $\Delta_{\{d_1,\dots,d_i\}}$ the domain structure obtained by eliminating from $\Delta$ all dimensions $d \in D\setminus\{d_1, \dots, d_i\}$. Moreover, let $V(r, \Delta, W)$ be the hypervolume of a hyperball under $d_C^\Delta(\cdot,\cdot, W)$ with radius $r$. In this case, a hyperball is the set of all points with a distance of at most $r$ (measured by $d_C^\Delta(\cdot,\cdot, W)$) to a central point. Note that the weights $W$ will cause this ball to have the form of an ellipse. For instance, in Figure \ref{fig:2DAlphaCut}, we assume that $w_{d_1} < w_{d_2}$ which means that we allow larger differences with respect to $d_1$ than with respect to $d_2$. This causes the hyperballs to be streched in the $d_1$ dimension, thus obtaining the shape of an ellipse.
We can in general describe $V(\widetilde{C}^{\alpha})$ as follows:
$$V(\widetilde{C}^{\alpha}) = \sum_{i=0}^n \left( \sum_{\substack{\{d_1,\dots,d_i\}\\ \subseteq D}} \left( \prod_{\substack{d \in\\ D\setminus\{d_1,\dots,d_i\}}} b_d \right) \cdot V\left( -\frac{1}{c} \cdot \ln\left(\frac{\alpha}{\mu_0}\right),\Delta_{\{d_1, \dots, d_i\}},W \right)\right)$$
The first sum of this formula runs over the number of dimensions with respect to which a given point $x \in \widetilde{C}^{\alpha}$ lies outside of $C$. We then sum over all combinations $\{d_1,\dots,d_i\}$ of dimensions for which this could be the case, compute the volume of the $i$-dimensional hyperball under these dimensions ($V(\cdot,\cdot,\cdot)$) and extrude this intermediate result in all remaining dimensions ($\prod_{d \in D\setminus\{d_1,\dots,d_i\}} b_d$).

Let us illustrate this formula for the $\alpha$-cuts shown in Figure \ref{fig:2DAlphaCut}: For $i = 0$, we can only select the empty set for the inner sum, so we end up with $b_1 \cdot b_2$, which is the size of the original cuboid (i.e., component I). For $i = 1$, we can either pick $\{d_1\}$ or $\{d_2\}$ in the inner sum. For $\{d_1\}$, we compute the size of the left and right part of component II by multiplying $V\left( -\frac{1}{c} \cdot \ln\left(\frac{\alpha}{\mu_0}\right),\Delta_{\{d_1\}},W \right)$ (i.e., their combined width) with $b_2$ (i.e., their height). For $\{d_2\}$, we analogously compute the size of the upper and the lower part of component II. Finally, for $i = 2$, we can only pick $\{d_1, d_2\}$ in the inner sum, leaving us with $V\left( -\frac{1}{c} \cdot \ln\left(\frac{\alpha}{\mu_0}\right),\Delta ,W \right)$, which is the size of component III.
One can easily see that the formula for $V(\widetilde{C}^{\alpha})$ also generalizes to higher dimensions.

\begin{proposition}
\label{proposition:hyperball}
$V(r,\Delta, W) = \frac{1}{\prod_{\delta \in \Delta} w_{\delta} \cdot \prod_{d \in \delta} \sqrt{w_d}} \cdot \frac{r^n}{n!} \cdot \prod_{\delta \in \Delta} \left(|\delta|! \cdot \frac{\pi^{\frac{|\delta|}{2}}}{\Gamma(\frac{|\delta|}{2}+1)}\right)$
\end{proposition}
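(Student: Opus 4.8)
The plan is to strip the weighted mixed $\ell_2/\ell_1$ ball down to a classical Dirichlet integral by two successive changes of variables. Since $d_C^\Delta$ is translation invariant I may center the ball at the origin, so the task is to compute the Lebesgue volume of
\[
 B_r = \Bigl\{x \in CS : \textstyle\sum_{\delta\in\Delta} w_\delta \sqrt{\sum_{d\in\delta} w_d\, x_d^2} \le r\Bigr\}, \qquad n = |D| = \sum_{\delta\in\Delta}|\delta| .
\]

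First I would \emph{normalize the dimension weights}: within each domain substitute $u_d = \sqrt{w_d}\, x_d$. This diagonal map has Jacobian $\prod_{\delta\in\Delta}\prod_{d\in\delta} w_d^{-1/2}$ and replaces each inner radical by the ordinary Euclidean norm $\|u^{(\delta)}\|_2$ of the block $u^{(\delta)} = (u_d)_{d\in\delta}$, giving $V(r,\Delta,W) = \bigl(\prod_{\delta\in\Delta}\prod_{d\in\delta} w_d^{-1/2}\bigr)\cdot \mathrm{vol}\bigl\{u : \sum_{\delta\in\Delta} w_\delta \| u^{(\delta)}\|_2 \le r\bigr\}$. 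Next I would pass to \emph{generalized polar coordinates inside every domain block}, with radial part $\rho_\delta = \|u^{(\delta)}\|_2 \ge 0$: integrating out the angular variables of the $|\delta|$-dimensional block contributes the surface measure of the radius-$\rho_\delta$ sphere in $\mathbb{R}^{|\delta|}$, namely $|\delta|\,\omega_{|\delta|}\,\rho_\delta^{|\delta|-1}\, d\rho_\delta$ with $\omega_k = \pi^{k/2}/\Gamma(k/2+1)$ the volume of the unit $k$-ball (for $|\delta| = 1$ this degenerates to the two endpoints of an interval, and $\omega_1 = 2$ keeps the accounting consistent). What remains is an integral over the radii alone,
\[
 \mathrm{vol}\bigl\{u : \textstyle\sum_{\delta\in\Delta} w_\delta \| u^{(\delta)}\|_2 \le r\bigr\} = \Bigl(\prod_{\delta\in\Delta} |\delta|\,\omega_{|\delta|}\Bigr)\int_{\rho_\delta\ge 0,\ \sum_{\delta\in\Delta} w_\delta\rho_\delta\le r} \prod_{\delta\in\Delta} \rho_\delta^{|\delta|-1}\, d\rho .
\]

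Finally I would evaluate this via the \emph{Dirichlet--Liouville integral}. Substituting $s_\delta = w_\delta\rho_\delta$ clears the domain weights from the constraint, contributing one factor $w_\delta^{-1}$ from each $d\rho_\delta$ and $w_\delta^{-(|\delta|-1)}$ from each $\rho_\delta^{|\delta|-1}$, and reduces the integral to the standard form $\int_{s_\delta\ge 0,\ \sum_\delta s_\delta\le r}\prod_\delta s_\delta^{|\delta|-1}\, ds = \frac{r^n\,\prod_{\delta}\Gamma(|\delta|)}{\Gamma(n+1)} = \tfrac{r^n}{n!}\prod_\delta(|\delta|-1)!$, which one proves by induction on $|\Delta|$ starting from the Beta integral. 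Collecting the two Jacobians, the angular factor $\prod_\delta |\delta|\,\omega_{|\delta|}$, and this integral, and simplifying with $|\delta|\cdot(|\delta|-1)! = |\delta|!$ and $\omega_{|\delta|} = \pi^{|\delta|/2}/\Gamma(|\delta|/2+1)$, assembles the weight factors together with the $\Gamma$ and $\pi$ factors into the displayed closed form.

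The main obstacle is this last step: one needs the Dirichlet--Liouville integral (elementary but not something to guess) and, more importantly, scrupulous bookkeeping of every Jacobian and normalization constant produced in Steps 1--3 — the degenerate one-dimensional domains are the easiest place to drop a factor, so I would cross-check the final expression there and against the single-domain case (where $B_r$ is an ordinary Euclidean ball) before trusting the algebra.
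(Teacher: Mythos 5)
Your derivation is correct, and every constant checks out. The two Jacobians ($\prod_{d\in D} w_d^{-1/2}$ from rescaling the dimensions, $\prod_{\delta\in\Delta} w_\delta^{-|\delta|}$ from clearing the domain weights), the angular factor $\prod_\delta |\delta|\,\omega_{|\delta|}$, and the Dirichlet--Liouville value $\tfrac{r^n}{n!}\prod_\delta(|\delta|-1)!$ assemble to
\[
V(r,\Delta,W)=\frac{1}{\prod_{d\in D} w_{\delta(d)}\sqrt{w_d}}\cdot\frac{r^n}{n!}\cdot\prod_{\delta\in\Delta}\Bigl(|\delta|!\,\tfrac{\pi^{|\delta|/2}}{\Gamma(|\delta|/2+1)}\Bigr),
\]
which is the proposition under the reading of the weight product that the paper itself uses later (the prefactor $\frac{1}{c^n\prod_{d\in D} w_{\delta(d)}\sqrt{w_d}}$ in the rewritten $V(\widetilde{C}^\alpha)$ and the definition $a_d = w_{\delta(d)}\sqrt{w_d}\,b_d\,c$ force one factor of $w_\delta$ \emph{per dimension} of $\delta$, exactly as your computation produces). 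Your sanity checks are the right ones: the single-domain case reduces to an ellipsoid of volume $\omega_n r^n/(w_\delta^n\prod_d\sqrt{w_d})$, and the all-one-dimensional case reduces to the weighted cross-polytope of volume $2^n r^n/(n!\prod_\delta w_\delta)$, both of which the closed form reproduces. Since the paper relegates its own argument to an external appendix, I can only compare at the level of strategy: the published proof builds the volume up by induction over the domains (integrating one domain's ellipsoid against the budget left for the others), which is the recursive unrolling of the same Dirichlet--Liouville integral you invoke in closed form. Your version is the more compact of the two; the inductive version makes the $|\Delta|=1$ and $|\Delta|=2$ base cases, where mistakes in the weight bookkeeping would first appear, explicit.
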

\begin{proof}
See appendix (\url{http://lucas-bechberger.de/appendix-sgai-2017/}).
\end{proof}
%As we have shown in \cite{Bechberger2017Hyperball}, $V(r, \Delta, W)$ can be computed as follows:
%$$V(r,\Delta, W) = \frac{1}{\prod_{\delta \in \Delta} w_{\delta} \cdot \prod_{d \in \delta} \sqrt{w_d}} \cdot \frac{r^n}{n!} \cdot \prod_{\delta \in \Delta} \left(|\delta|! \cdot \frac{\pi^{\frac{|\delta|}{2}}}{\Gamma(\frac{|\delta|}{2}+1)}\right)$$
%
Defining $\delta(d)$ as the unique $\delta \in \Delta$ with $d \in \delta$, and $a_d := w_{\delta(d)} \cdot \sqrt{w_{d}} \cdot b_d \cdot c$, we can use Proposition \ref{proposition:hyperball} to rewrite $V(\widetilde{C}^{\alpha})$:

\begin{align*}
V(\widetilde{C}^\alpha) &=  
\frac{1}{c^n\prod_{d \in D} w_{\delta(d)} \sqrt{w_d}}
\sum_{i=0}^{n} \Bigg( 
\frac{(-1)^i \cdot \ln\left(\frac{\alpha}{\mu_0}\right)^i}{i!} \cdot 
\sum_{\substack{\{d_1,\dots,d_i\}\\ \subseteq D}} 
\left(\prod_{\substack{d \in \\D \setminus \{d_1,\dots,d_i\}}} a_d\right) \cdot \\
&\hspace{4.5cm}\prod_{\substack{\delta \in \\ \Delta_{\{d_1,\dots,d_i\}}}} \left(
|\delta|! \cdot \frac{\pi^{\frac{|\delta|}{2}}}{\Gamma(\frac{|\delta|}{2}+1)}\right)\Bigg)\\
\end{align*}
We can solve Equation \ref{eqn:integral} to compute $M(\widetilde{C})$ by using the following lemma:

\begin{lemma}
\label{lemma:logIntegral}
$\forall n \in \mathbb{N}: \int_0^{1} \ln(x)^n dx = (-1)^n \cdot n!$ 
\end{lemma}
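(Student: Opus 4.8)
The cleanest route is to evaluate the integral directly via the substitution $x = e^{-t}$, which converts it into a signed Gamma integral; an induction on $n$ using integration by parts also works and is essentially the same computation organized recursively.

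With $x = e^{-t}$ we have $dx = -e^{-t}\,dt$, the limits $x=1$ and $x=0$ become $t=0$ and $t=\infty$, and $\ln(x)^n = (-t)^n$. Hence
\[
\int_0^1 \ln(x)^n\,dx = \int_0^{\infty} (-t)^n e^{-t}\,dt = (-1)^n \int_0^{\infty} t^n e^{-t}\,dt = (-1)^n \cdot \Gamma(n+1) = (-1)^n \cdot n!,
\]
where we used the standard value $\Gamma(n+1) = n!$ for $n \in \mathbb{N}$ (this also covers the base case $n = 0$, since $\int_0^1 1\,dx = 1 = (-1)^0 \cdot 0!$). The only point requiring care is that the integral on the left is a well-defined improper Riemann integral at $x = 0$; this holds because $|\ln(x)|^n$ is integrable on $(0,1]$, equivalently because $t^n e^{-t}$ is integrable on $[0,\infty)$ — precisely the classical convergence of the Gamma function.

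Alternatively, one argues by induction on $n$. The base case $n = 0$ is immediate. For the inductive step, integrate $\int_0^1 \ln(x)^{n+1}\,dx$ by parts with $u = \ln(x)^{n+1}$ and $dv = dx$, so that $v = x$ and $du = (n+1)\ln(x)^n x^{-1}\,dx$; this yields $\bigl[x \ln(x)^{n+1}\bigr]_0^1 - (n+1)\int_0^1 \ln(x)^n\,dx$, which by the induction hypothesis equals $-(n+1)\cdot(-1)^n n! = (-1)^{n+1}(n+1)!$.

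\textbf{Main obstacle.} In either approach the only non-routine point is showing that the boundary term vanishes, i.e.\ $\lim_{x \to 0^+} x \ln(x)^{n+1} = 0$. This follows from L'H\^opital's rule applied to $\ln(x)^{n+1} / x^{-1}$ (iterated $n+1$ times), or, more transparently, from the substitution $x = e^{-t}$ again, under which the term becomes $\lim_{t \to \infty} (-t)^{n+1} e^{-t} = 0$. At the other endpoint the term is $0$ simply because $\ln(1) = 0$. Everything else is a formal computation.
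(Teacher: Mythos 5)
Your proof is correct and follows essentially the same route as the paper: the paper's proof substitutes $x = e^t$ and then $s = -t$, which is exactly your single substitution $x = e^{-t}$, reducing the integral to $(-1)^n \Gamma(n+1)$. Your additional remarks on convergence and the induction alternative are fine but not needed beyond the paper's argument.
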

\begin{proof}
Substitute $x = e^t$ and $s = -t$, then apply the definition of the $\Gamma$ function.
\end{proof}

\begin{proposition}
\label{proposition:Measure}
The measure of a fuzzified cuboid $\widetilde{C}$ can be computed as follows:
\begin{align*}
M(\widetilde{C}) &= \frac{\mu_0}{c^n\prod_{d \in D} w_{\delta(d)} \sqrt{w_d}}
\sum_{i=0}^{n} \Bigg( 
\sum_{\substack{\{d_1,\dots,d_i\}\\ \subseteq D}} 
\left(\prod_{\substack{d \in \\ D \setminus \{d_1,\dots,d_i\}}} a_d\right) \cdot\\
&\hspace{5cm}\prod_{\substack{\delta \in\\ \Delta_{\{d_1,\dots,d_i\}}}} \left(
|\delta|! \cdot \frac{\pi^{\frac{|\delta|}{2}}}{\Gamma(\frac{|\delta|}{2}+1)}\right)\Bigg)
\end{align*}
\end{proposition}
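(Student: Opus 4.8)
The plan is to substitute the closed form for $V(\widetilde{C}^\alpha)$ derived just above into the right-hand side of Equation \ref{eqn:integral} and carry out the $\alpha$-integration term by term. First I would observe that the $\alpha$-cut $\widetilde{C}^\alpha$ is empty whenever $\alpha > \mu_0$, because $\mu_0$ is the maximal membership value of $\widetilde{C}$; hence $V(\widetilde{C}^\alpha) = 0$ on $(\mu_0, 1]$ and the integral $\int_0^1 V(\widetilde{C}^\alpha)\,d\alpha$ reduces to $\int_0^{\mu_0} V(\widetilde{C}^\alpha)\,d\alpha$. On the interval $(0,\mu_0]$ the quantity $-\frac{1}{c}\ln(\frac{\alpha}{\mu_0})$ is nonnegative, so the formula for $V(\widetilde{C}^\alpha)$ applies verbatim there.

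Next I would note that the expression for $V(\widetilde{C}^\alpha)$ is a finite sum (over $i \in \{0,\dots,n\}$ and over subsets $\{d_1,\dots,d_i\} \subseteq D$) whose only $\alpha$-dependent factor is $\frac{(-1)^i \ln(\frac{\alpha}{\mu_0})^i}{i!}$. I would therefore pull the constant prefactor $\frac{1}{c^n \prod_{d \in D} w_{\delta(d)}\sqrt{w_d}}$ and all $\alpha$-independent products ($\prod a_d$ and $\prod (|\delta|!\cdot\pi^{|\delta|/2}/\Gamma(|\delta|/2+1))$) out of the integral, and exchange the finite sums with the integral by linearity. What remains is to evaluate $\int_0^{\mu_0} \ln(\frac{\alpha}{\mu_0})^i\,d\alpha$ for each $i$.

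For that integral I would substitute $\beta = \frac{\alpha}{\mu_0}$, so that $d\alpha = \mu_0\,d\beta$ and the limits become $0$ and $1$, giving $\mu_0 \int_0^1 \ln(\beta)^i\,d\beta$; applying Lemma \ref{lemma:logIntegral} then yields $\mu_0 \cdot (-1)^i \cdot i!$. Multiplying this by the surviving coefficient $\frac{(-1)^i}{i!}$ produces $(-1)^{2i}\mu_0 = \mu_0$, independently of $i$, so the whole $\alpha$-dependent part of each summand collapses to the single constant $\mu_0$, which factors in front of the double sum. Collecting the terms reproduces exactly the claimed formula for $M(\widetilde{C})$.

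I do not anticipate a real obstacle here: once Lemma \ref{lemma:logIntegral} is in hand the argument is essentially bookkeeping, and interchanging integral and summation is unproblematic because the sums are finite. The only points that require attention are (i) restricting the range of integration to $(0,\mu_0]$ and keeping the factor $\mu_0$ produced by the rescaling $\beta = \alpha/\mu_0$ — dropping it would lose the $\mu_0$ that appears in the final expression — and (ii) checking that the sign $(-1)^i$ coming from $V(\widetilde{C}^\alpha)$ cancels against the $(-1)^i$ coming from the log-integral rather than reinforcing it.
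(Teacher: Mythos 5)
Your proposal is correct and follows essentially the same route as the paper, whose proof is just the one-line instruction to substitute $x = \alpha/\mu_0$ in Equation \ref{eqn:integral} and apply Lemma \ref{lemma:logIntegral}. You supply the details the paper omits, in particular the observation that $V(\widetilde{C}^\alpha)=0$ for $\alpha>\mu_0$ (which justifies the integration limits after rescaling) and the cancellation of the two $(-1)^i$ factors, both of which check out.
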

\begin{proof}
Substitute $x = \frac{\alpha}{\mu_0}$ in Equation \ref{eqn:integral} and apply Lemma \ref{lemma:logIntegral}.
\end{proof}

Note that although the formula for $M(\widetilde{C})$ is quite complex, it can be easily implemented via a set of nested loops. As mentioned earlier, we can use the result from Proposition \ref{proposition:Measure} in combination with the inclusion-exclusion formula (Equation \ref{eqn:inclusionExclusion}) to compute $M(\widetilde{S})$ for any concept $\widetilde{S}$. Also Equation \ref{eqn:inclusionExclusion} can be easily implemented via a set of nested loops.
Note that $M(\widetilde{S})$ is always computed only on $\Delta_S$, i.e., the set of domains on which $\widetilde{S}$ is defined.

\subsection{Subsethood}
\label{Extension:Subsethood}
In order to represent knowledge about a hierarchy of concepts, one needs to be able to determine whether one concept is a subset of another concept. The classic definition of subsethood for fuzzy sets reads as follows:
$$\widetilde{S}_1 \subseteq \widetilde{S}_2 :\iff \forall {x \in CS}: \mu_{\widetilde{S}_1}(x) \leq \mu_{\widetilde{S}_2}(x)$$
This definition has the weakness of only providing a binary/crisp notion of subsethood. It is desirable to define a \emph{degree} of subsethood in order to make more fine-grained distinctions. 
Many of the definitions for degrees of subsethood proposed in the fuzzy set literature \cite{Bouchon-Meunier1996,Young1996} require that the underlying universe is discrete. The following definition \cite{Kosko1992} works also in a continuous space and is conceptually quite straightforward: 
$$Sub(\widetilde{S}_1,\widetilde{S}_2) = \frac{M(\widetilde{S}_1 \cap \widetilde{S}_2)}{M(\widetilde{S}_1)} \quad \text{with a measure } M$$
One can interpret this definition intuitively as the ``percentage of $\widetilde{S}_1$ that is also in $\widetilde{S}_2$''. It can be easily implemented based on the measure defined in Section \ref{Extension:Hypervolume} and the intersection defined in \cite{Bechberger2017KI}. If $\widetilde{S}_1$ and $\widetilde{S}_2$ are not defined on the same domains, then we first project them onto their shared subset of domains before computing their degree of subsethood.

When computing the intersection of two concepts with different sensitivity parameters $c^{(1)}, c^{(2)}$ and different weights $W^{(1)}, W^{(2)}$, one needs to define new parameters $c'$ and $W'$ for the resulting concept. In our earlier work \cite{Bechberger2017KI}, we have argued that the sensitivity parameter $c'$ should be set to the minimum of $c^{(1)}$ and $c^{(2)}$. As a larger value for $c$ causes the membership function to drop faster, this means that the concept resulting from intersecting two imprecise concepts is at least as imprecise as the original concepts. Moreover, we defined $W'$ as a linear interpolation between $W^{(1)}$ and $W^{(2)}$. The importance of each dimension and domain to the new concept will thus lie somewhere between its importance with respect to the two original concepts.

Now if $c^{(1)} > c^{(2)}$, then $c' = \min(c^{(1)}, c^{(2)}) = c^{(2)} < c^{(1)}$. It might thus happen that $M(\widetilde{S}_1 \cap \widetilde{S}_2) > M(\widetilde{S}_1)$, and that therefore $Sub(\widetilde{S}_1,\widetilde{S}_2) > 1$. As we would like to confine $Sub(\widetilde{S}_1,\widetilde{S}_2)$ to the interval $[0,1]$, we should use the same $c$ and $W$ for computing both $M(\widetilde{S}_1 \cap \widetilde{S}_2)$ and $M(\widetilde{S}_1)$. 

When judging whether $\widetilde{S}_1$ is a subset of $\widetilde{S}_2$, we can think of $\widetilde{S}_2$ as setting the context by determining the relative importance of the different domains and dimensions as well as the degree of fuzziness. For instance, when judging whether tomatoes are vegetables, we focus our attention on the features that are crucial to the definition of the ``vegetable'' concept. We thus propose to use $c^{(2)}$ and $W^{(2)}$ when computing $M(\widetilde{S}_1 \cap \widetilde{S}_2)$ and $M(\widetilde{S}_1)$.

\subsection{Implication}
\label{Extension:Implication}

Implications play a fundamental role in rule-based systems and all approaches that use formal logics for knowledge representation. It is therefore desirable to define an implication function on concepts, such that one is able to express facts like $apple \Rightarrow red$ within our formalization.

In the fuzzy set literature \cite{Mas2007}, a fuzzy implication is defined as a generalization of the classical crisp implication. Computing the implication of two fuzzy sets typically results in a new fuzzy set which describes for each point in the space the validity of the implication. In our setting, we are however more interested in a single number that indicates the overall validity of the implication $apple \Rightarrow red$.
We propose to reuse the definition of subsethood from Section \ref{Extension:Subsethood}: It makes intuitive sense in our geometric setting to say that $apple \Rightarrow red$ is true to the degree to which $apple$ is a subset of $red$. We therefore define:
$$Impl(\widetilde{S}_1,\widetilde{S}_2) := Sub(\widetilde{S}_1,\widetilde{S}_2)$$

\subsection{Similarity and Betweenness}
\label{Extension:SimilarityBetweenness}

In our prior work \cite{Bechberger2017KI} (cf. Section \ref{CS:DimensionsDomainsDistance}), we have already provided definitions for similarity and betweenness of points. We can naively define similarity and betweenness for concepts by applying the definitions from Section \ref{CS:DimensionsDomainsDistance} to the midpoints of the concepts' central regions $P$ (cf. Definition \ref{def:SSSS}). Betweenness is a binary relation and independent of dimension weights and sensitivity parameters. For computing the similarity, we propose to use both the dimension weights and the sensitivity parameter of the second concept, which again in a sense provides the context for the similarity judgement. If the two concepts are defined on different sets of domains, we use only their common subset of domains for computing the distance of their midpoints and thus their similarity.
%================================================================================================================================================================%
\section{Illustrative Example}
\label{Example}

%---------------------------------------------------------------------------------------------------------------%
\subsection{A Conceptual Space and its Concepts}
\label{Example:Definition}

We consider a very simplified conceptual space for fruits, consisting of the following domains and dimensions:
$$\Delta = \{\delta_{color} = \{d_{hue}\},\delta_{shape} = \{d_{round}\},\delta_{taste} = \{d_{sweet}\}\}$$
$d_{hue}$ describes the hue of the observation's color, ranging from $0.00$ (purple) to $1.00$ (red). $d_{round}$ measures the percentage to which the bounding circle of an object is filled. $d_{sweet}$ represents the relative amount of sugar contained in the fruit, ranging from 0.00 (no sugar) to 1.00 (high sugar content). As all domains are one-dimensional, the dimension weights $w_{d}$ are always equal to 1.00 for all concepts. We assume that the dimensions are ordered like this: $d_{hue},d_{round},d_{sweet}$. Table \ref{tab:FruitSpace} defines several concepts in this space and Figure \ref{fig:FruitSpace} visualizes them.

\begin{table}[t]
  \centering
  \caption{Definitions of several concepts.}
  \begin{tabular}{|l||c|c|c|c|c|c|c|c|}
    \hline
    Concept 	& $\Delta_S$& $p^-$ 				& $p^+$ 				& $\mu_0$ 	& $c$ 	& \multicolumn{3}{|c|}{$W$}\\ %\hline 
    & & & & & & $w_{\delta_{color}}$ & $w_{\delta_{shape}}$ & $w_{\delta_{taste}}$\\ \hline \hline
    Orange		& $\Delta$	& (0.80, 0.90, 0.60)	& (0.90, 1.00, 0.70)	& 1.0		& 15.0	& 1.00 & 1.00 & 1.00 \\ \hline
    Lemon		& $\Delta$	& (0.70, 0.45, 0.00)	& (0.80, 0.55, 0.10)	& 1.0		& 20.0	& 0.50 & 0.50 & 2.00 \\ \hline
    Granny & \multirow{2}{*}{$\Delta$}	& \multirow{2}{*}{(0.55, 0.70, 0.35)}	& \multirow{2}{*}{(0.60, 0.80, 0.45)}	& \multirow{2}{*}{1.0}		& \multirow{2}{*}{25.0} & \multirow{2}{*}{1.00} & \multirow{2}{*}{1.00} &  \multirow{2}{*}{1.00}\\ 
    Smith & & & & & & & &\\ \hline
    \multirow{3}{*}{Apple}	& \multirow{3}{*}{$\Delta$}	& (0.50, 0.65, 0.35)	& (0.80, 0.80, 0.50)	& \multirow{3}{*}{1.0}		& \multirow{3}{*}{10.0}	& \multirow{3}{*}{0.50} & \multirow{3}{*}{1.50} &  \multirow{3}{*}{1.00} \\ %\hline
    			&			& (0.65, 0.65, 0.40)	& (0.85, 0.80, 0.55)	&			&		&  & & \\ %\hline
    			&			& (0.70, 0.65, 0.45)	& (1.00, 0.80, 0.60)	&			&		&  & & \\ \hline
    Red			& $\{\delta_{color}\}$ & (0.90, -$\infty$, -$\infty$) & (1.00, +$\infty$, +$\infty$) & 1.0 & 20.0 & 1.00 & -- & -- \\ \hline
  \end{tabular}\\[1ex]
  \label{tab:FruitSpace}
\end{table}

\begin{figure}[t]
\centering
\includegraphics[width=0.9\textwidth]{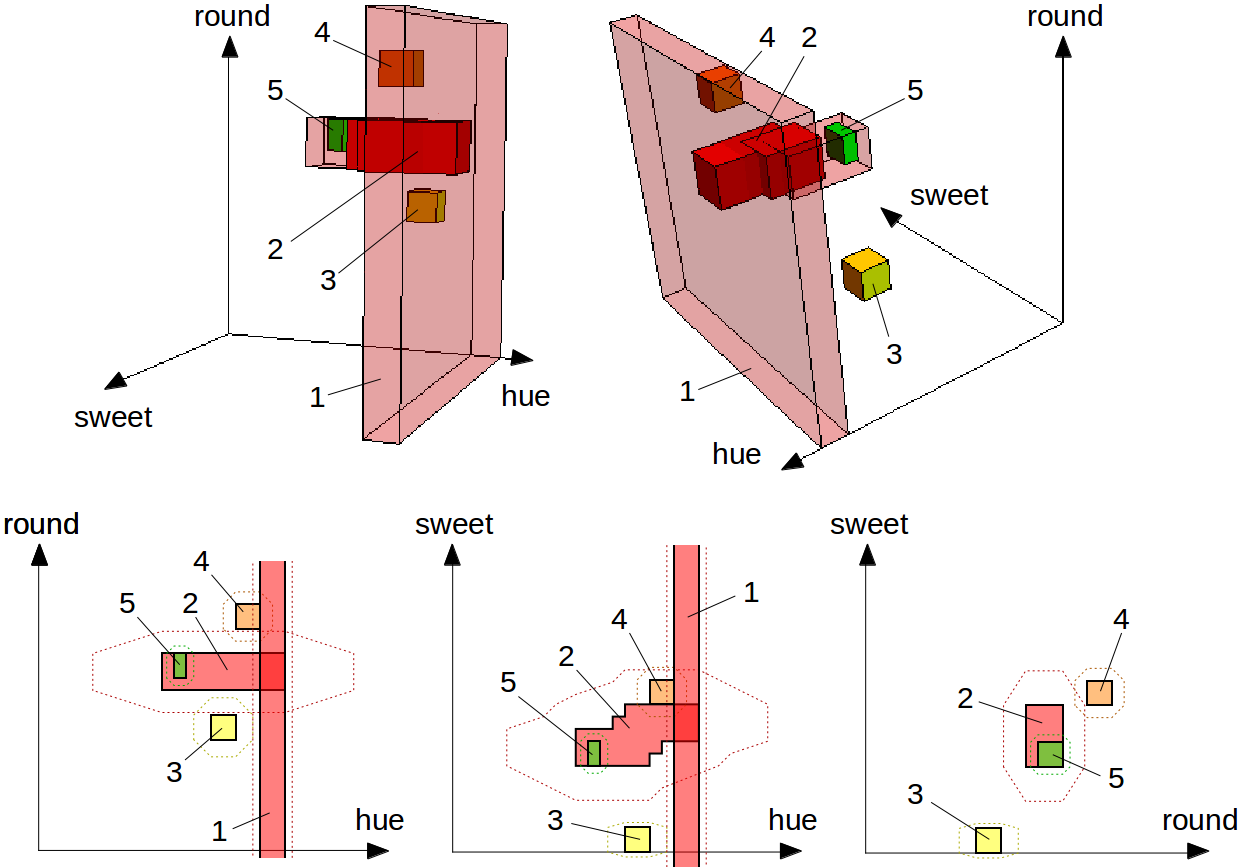}
\caption{Top: Three-dimensional visualization of the fruit space (only cores). Bottom: Two-dimensional visualizations of the fruit space (cores and 0.5-cuts). The concepts are labeled as follows: red (1), apple (2), lemon (3), orange (4), Granny Smith (5).}
\label{fig:FruitSpace}
\end{figure}

%---------------------------------------------------------------------------------------------------------------%
\subsection{Computations}
\label{Example:Computations}

\begin{table}[t]
  \centering
  \caption{Computations of different relations. Note that $Impl(\widetilde{S}_1, \widetilde{S}_2) = Sub(\widetilde{S}_1, \widetilde{S}_2)$.}
  \begin{tabular}{|c|c||c|c|c|c|c|c|}
  \hline
  $\widetilde{S}_1$ & $\widetilde{S}_2$ & $M(\widetilde{S}_1)$ & $M(\widetilde{S}_2)$ & $Sub(\widetilde{S}_1, \widetilde{S}_2)$ & $Sub(\widetilde{S}_2, \widetilde{S}_1)$ & $Sim(\widetilde{S}_1, \widetilde{S}_2)$ & $Sim(\widetilde{S}_2, \widetilde{S}_1)$\Tstrut \\ \hline \hline
  Granny Smith 	& Apple 	& 0.0042 	& 0.1048 	& 1.0000 	& 0.1171 	& 0.1353 	& 0.0010 	\\ \hline
  Orange 		& Apple 	& 0.0127	& 0.1048	& 0.1800	& 0.0333	& 0.0036	& 0.0006	\\ \hline
  Lemon			& Apple		& 0.0135	& 0.1048	& 0.0422	& 0.0054	& 0.0005	& 0.0000	\\ \hline
  Red			& Apple		& 0.2000	& 0.1048	& 1.0000	& 0.3333	& 0.3679	& 0.0183	\\ \hline
  \end{tabular}\\[2ex]
  \begin{tabular}{|c|c|c||c|}
  \hline
  $\widetilde{S}_1$ & $\widetilde{S}_2$ & $\widetilde{S}_2$ & B($\widetilde{S}_1$, $\widetilde{S}_2$, $\widetilde{S}_3)$\Tstrut \\ \hline \hline
  Lemon			& Apple			& Orange	& True 	\\ \hline
  Lemon			& Granny Smith	& Orange	& False	\\ \hline
  Granny Smith	& Apple			& Orange	& False	\\ \hline
  \end{tabular}\\[1ex]
  \label{tab:computations}
\end{table}

Table \ref{tab:computations} shows the results of using the definitions from Section \ref{Extension} on the concepts defined in Section \ref{Example:Definition}. Note that $M(\widetilde{S}_{lemon}) \neq M(\widetilde{S}_{orange})$ because the two concepts have different weights and different sensitivity parameters. Also all relations involving the property ``red'' tend to yield relatively high numbers -- this is because all computations only take place within the single domain on which ``red'' is defined. The numbers computed for the subsethood/implication relation nicely reflect our intuitive expectations. Finally, both the values for similarity and betweenness can give a rough idea about the relationship between concepts, but can only yield relatively shallow insights. This indicates that a less naive approach is needed for these two relations. Especially a fuzzy betweenness relation yielding a degree of betweenness seems to be desirable.

%================================================================================================================================================================%
\section{Related Work}
\label{RelatedWork}

Our work is of course not the first attempt to devise an implementable formalization of the conceptual spaces framework.

%Aisbett2001
An early and very thorough formalization was done by Aisbett \& Gibbon \cite{Aisbett2001}. Like we, they consider concepts to be regions in the overall conceptual space. However, they stick with the assumption of convexity and do not define concepts in a parametric way. The only operations they provide are distance and similarity of points and regions. Their formalization targets the interplay of symbols and geometric representations, but it is too abstract to be implementable. 
% dist & sim of objects & regions (Hausdorff); classification

%Rickard2006
Rickard \cite{Rickard2006} provides a formalization based on fuzziness. He represents concepts as co-occurence matrices of their properties. By using some mathematical transformations, he interprets these matrices as fuzzy sets on the universe of ordered property pairs. Operations defined on these concepts include similarity judgements between concepts and between concepts and instances. Rickard's representation of correlations is not geometrical: He first discretizes the domains (by defining properties) and then computes the co-occurences between these properties. Depending on the discretization, this might lead to a relatively coarse-grained notion of correlation. Moreover, as properties and concepts are represented in different ways, one has to use different learning and reasoning mechanisms for them. His formalization is also not easy to work with due to the complex mathematical transformations involved.
% concept similarity & observation-concept similarity; (subsethood)

%Adams2009
Adams \& Raubal \cite{Adams2009} represent concepts by one convex polytope per domain. This allows for efficient computations while being potentially more expressive than our cuboid-based representation. The Manhattan metric is used to combine different domains. However, correlations between different domains are not taken into account and cannot be expressed in this formalization as each convex polytope is only defined on a single domain. Adams \& Raubal also define operations on concepts, namely intersection, similarity computation, and concept combination. This makes their formalization quite similar in spirit to ours. 
% intersectino of regions; membership point in region; distance within a domain; instance similarity, concept similarity; concept combination (property-concept, concept-concept, relativeProperty-concept)

%Lewis2016
Lewis \& Lawry \cite{Lewis2016} formalize conceptual spaces using random set theory. They define properties as random sets within single domains and concepts as random sets in a boolean space whose dimensions indicate the presence or absence of properties. In order to define this boolean space, a single property is taken from each domain. Their approach is similar to ours in using a distance-based membership function to a set of prototypical points. However, their work purely focuses on modeling conjunctive concept combinations and does not consider correlations between domains. 
% concept combination (prop-prop); conjunction of compound concepts (property-concept, concept-concept)

As one can see, none of the formalizations listed above provides a set of operations that is as comprehensive as the one offered by our extended formalization.
%================================================================================================================================================================%
\section{Conclusion and Future Work}
\label{Conclusion}

In this paper, we extended our previous formalization of the conceptual spaces framework by providing ways to measure relations between concepts: concept size, subsethood, implication, similarity, and betweenness. This considerably extends our framework's capabilities for representing knowledge and makes it (to the best of our knowledge) the most thorough and comprehensive formalization of conceptual spaces developed so far.

In future work, we will implement this extended formalization in software. Moreover, we will provide more thorough definitions of similarity and betweenness for concepts, given that our current definitions are rather naive. A potential starting point for this can be the betwenness relations defined by Derrac \& Schockaert \cite{Derrac2015}. Finally, our overall research goal is to use machine learning in conceptual spaces, which will put this formalization to practical use.

% The file named.bst is a bibliography style file for BibTeX 0.99c
\bibliographystyle{splncs03}
\bibliography{/home/lbechberger/Documents/Papers/jabref.bib}

\end{document}